\let\NAT@parse\undefined
\newtheorem{theorem}{Theorem}
\newtheorem{assumption}{Assumption}
\newtheorem{remark}{Remark}
\newtheorem{proposition}{Proposition}
\newtheorem{definition}{Definition}
\title{\LARGE \bf
Wasserstein Distributionally Robust Chance Constrained Trajectory Optimization for Mobile Robots within Uncertain Safe Corridor}
\author{Shaohang Xu$^{1,2}$,  Haolin Ruan$^2$, Wentao Zhang$^1$, Yian Wang$^{1}$,
Lijun Zhu$^{1}$ and Chin Pang Ho$^{2}$
\thanks{$^{1}$ School of Artificial Intelligence and Automation, Huazhong University of Science and Technology, China, shaohangxu@hust.edu.cn, wentaozhang@hust.edu.cn, yianwang@hust.edu.cn, ljzhu@hust.edu.cn}%
\thanks{$^{2}$ School of Data Science, City University of Hong Kong, HKSAR, shaohanxu2-c@my.cityu.edu.hk, haolin.ruan@my.cityu.edu.hk, clint.ho@cityu.edu.hk}%
}
\begin{document}

\maketitle
\maketitle
\thispagestyle{empty}
\pagestyle{empty}
\begin{abstract}
Safe corridor-based Trajectory Optimization (TO) presents an appealing approach for collision-free path planning of autonomous robots, offering global optimality through its convex formulation. The safe corridor is constructed based on the perceived map, however, the non-ideal perception induces uncertainty, which is rarely considered in trajectory generation. In this paper, we propose Distributionally Robust Safe Corridor Constraints (DRSCCs) to consider the uncertainty of the safe corridor. Then, we integrate DRSCCs into the trajectory optimization framework using Bernstein basis polynomials.  Theoretically, we rigorously prove that the trajectory optimization problem incorporating DRSCCs is equivalent to a computationally efficient, convex quadratic program.  Compared to the nominal TO, our method enhances navigation safety by significantly reducing the infeasible motions in presence of uncertainty. Moreover, the proposed approach is validated through two robotic applications, a micro Unmanned Aerial Vehicle (UAV) and a quadruped robot Unitree A1. 
\end{abstract}

\section{Introduction}
We consider trajectory optimization for mobile robots navigating from an initial state to a goal state while avoiding obstacles. It is known that the collision-free constraints are in general nonlinear and non-convex, and the optimization solvers cannot guarantee to obtain the global optimum with these non-convex constraints.  In contrast, \emph{safe corridor}, defined as the collection of convex decomposition of the collision-free space, is proposed to address this issue  \cite{gao2018online,deits2015efficient,chen2016online,liu2017planning,sun2021fast}.
The optimization problem with the safe corridor constraints is usually a convex program, which can be efficiently computed to obtain the global optimum. However, since perception errors are inevitable in real-world applications,   the safe corridor obtained from the perceived obstacle map is not exactly known. Also, the optimization problems might be sensitive to parameter specifications \cite{ben2009robust}, and thus it may cause catastrophic navigation failures on the real robots when the uncertainty of the safe corridor is ignored.

In this paper, we aim to generate collision-free trajectories considering the uncertainty of the safe corridor. To this end, we propose a method called Distributionally Robust Safe Corridor Constrained Trajectory Optimization (DRSCC-TO) to ensure the safety of navigation for mobile robots in uncertain environments. The motivation for our approach is illustrated in Fig. \ref{fig:example}. In the left figure, the nominal TO approach, which does not consider uncertainty, aims to minimize energy consumption and generates the blue trajectory. However, this blue trajectory becomes infeasible in the right figure, where a small and unknown region is occupied by obstacles that are not well perceived. In contrast, the red trajectory, resulting from our robust trajectory optimization approach, remains collision-free in both cases. There are two highlights of our method:
 
\begin{figure}
\centering

\includegraphics[width=8cm]{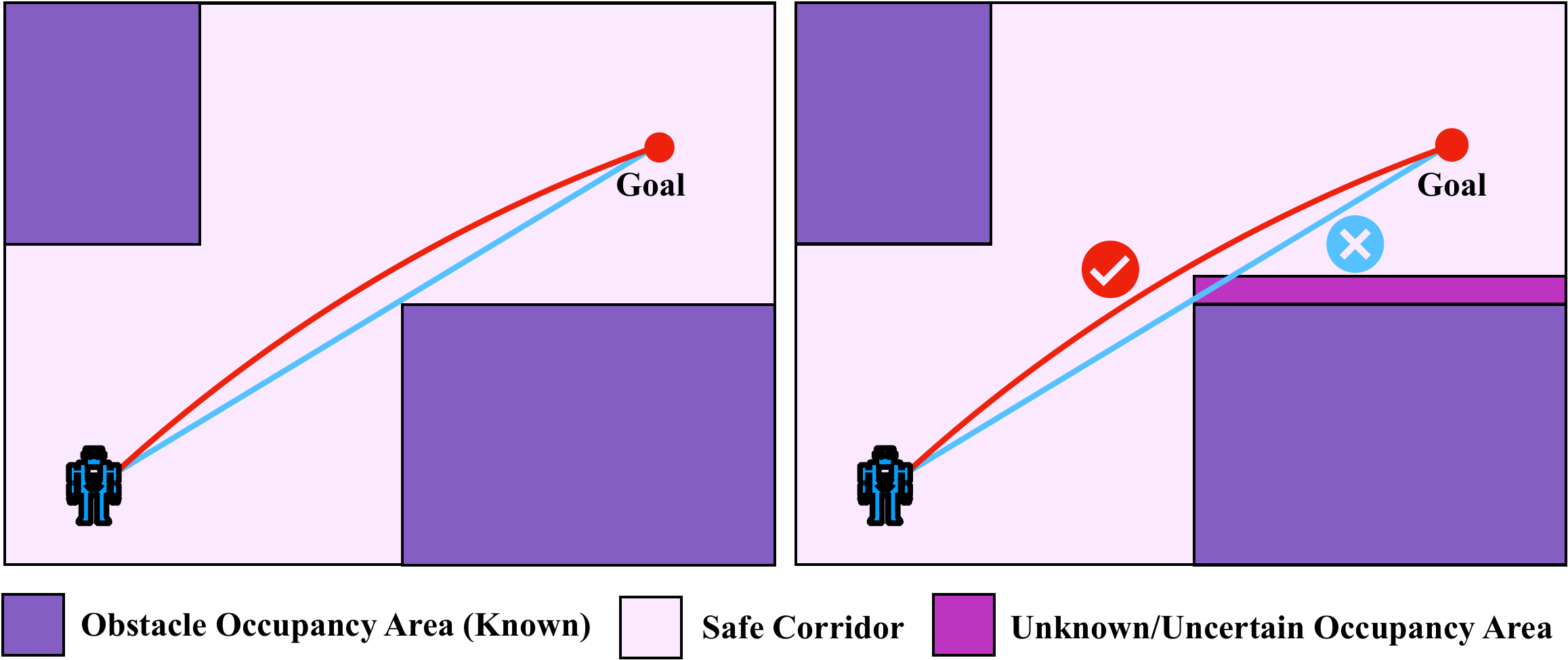}\label{fig:example}

\caption{An example of TO with (red) and without (blue) considering uncertainty. When the safe corridor is exactly known (left), both methods are collision-free, and the blue trajectory is shorter and could minimize the control input. However, when the obstacle information is not perfect and/or there is an unknown obstacle region (right), the blue trajectory will be infeasible, leading to unacceptable collisions. }
\label{fig:quad1}
\end{figure}

\begin{itemize}
	\item Distributionally robust chance constraints: We leverage the Wasserstein metric to model the uncertainty of the safe corridor. To the best of our knowledge, this is the first time the distributional ambiguity of the safe corridor has been considered in the literature.
	With prescribed confidence, we can guarantee there are no infeasible motions and thus improve navigation safety. 
	\item Convexity: In general, collision-free trajectory optimization problems are non-convex. In this paper, we rigorously prove that the optimization problem in the proposed method is equivalent to a convex Quadratic Program (QP), which is known to be practically viable for real robots. In contrast, distributionally robust (chance-constrained) optimization problems are often computationally challenging due to their non-convexity \cite{hakobyan2021wasserstein}, and thus are usually limited to simulation without real robot applications. 
\end{itemize} 

\subsection{Related Work}

\textbf{Safe Corridor based TO} 
has been well studied for collision-free navigation of mobile robots. \cite{deits2015efficient}  proposes a semidefinite program-based approach for collision-free UAV navigation, which leverages a greedy finding algorithm for large convex regions of the collision-free space \cite{deits2015computing}.
The efficient operations of the octree-map representation are proposed in \cite{chen2016online} to obtain the safe corridor, and the safe corridor constraints are incorporated into a minimum snap TO framework \cite{mellinger2011minimum}. Similarly, \cite{liu2017planning} proposes a two-step algorithm to construct the polyhedron-like safe corridor. To guarantee the feasibility of the overall trajectory, \cite{gao2018online} further extends \cite{chen2016online} by using Bernstein basis polynomial as the trajectory representation, instead of the original discretization points. The time allocation issue of safe corridor-based TO is addressed in \cite{sun2021fast}  through a bilevel optimization approach.  In all the above methods, the safe corridor is considered to be deterministic without uncertainty. 

In this paper, we extend the safe corridor-based TO from the \emph{deterministic} formulation to the \emph{distributionally robust} formulation. In particular, we build on the Bernstein basis method introduced in \cite{gao2018online}, and provide the distributionally robust chance-constrained counterpart to formally consider the distributional ambiguity of the safe corridor. 

\textbf{Robust TO} 
is proposed to deal with the uncertainty in real-robot applications, such as measurement errors, external disturbances, etc. Set-based approaches provide a safety guarantee for robotic systems with bounded (additional) disturbances. A closely related research topic in the Model Predictive Control (MPC) community is tube-based MPC \cite{mayne2005robust}, which tightens the constraints offline, and then online solves an optimization problem as the nominal MPC does. Similar to the idea of tube-based MPC, a robust feedback motion planning method is presented in \cite{majumdar2017funnel}, which computes the outer approximation of the reachable set under uncertainty offline, and then sequentially composes the motion plans online. Chance-constrained optimization \cite{charnes1959chance} provides the basic tools for motion planning of stochastic robot systems. A chance-constrained TO method is proposed in \cite{blackmore2011chance} to consider the Gaussian noise in both the robot and obstacle dynamics. A robust cost function in TO is introduced in \cite{manchester2017dirtrel} by computing the ellipsoid bounds around the nominal trajectory. Recently, distributionally robust optimization (DRO) based approaches have been proposed to consider the uncertainty variable without perfectly known probability distributions in robotic motion planning. \cite{nakka2022trajectory} presents a nonlinear optimal control-based TO method for stochastic dynamics systems with distributionally robust chance constraints, which is computed by sequential quadratic programming. The Wasserstein metric is used in \cite{hakobyan2021wasserstein} to consider uncertain moving obstacles and a distributionally robust MPC is proposed to achieve a probabilistic guarantee of the out-of-sample risk. However, it is only validated in numerical simulation.   

Our method is also a robust TO approach. Different from above-mentioned methods, we leverage DRO techniques to prove our TO problem is a convex QP, making it practically useful for mobile robot applications. In contrast, the above-mentioned methods leverage non-convex optimization problems which are difficult, if not impossible, to obtain the global optima in real-time. As a result, our method can be easily deployed onto real robots, while the above methods are usually limited to simulation. Also, it is worth noting that we mainly focus on the uncertainty of the safe corridor, which is rarely considered in the literature.

\subsection{Contributions}
The contributions of this paper are summarized as follows:	
\begin{itemize}
	\item We propose a robust trajectory optimization framework considering the distributional ambiguity of the safe corridor for mobile robots. 
	\item Theoretically, we prove that the distributionally robust chance constrained trajectory optimization problem is equivalent to a convex QP. 
	\item By conducting extensive numerical tests, we demonstrate that our methodology has the capacity to significantly decrease the number of infeasible motions in comparison to the nominal approach.
	\item We evaluate the proposed method on a quadruped robot and a UAV, showing that our method is practically viable for safety-critical robot navigation.
\end{itemize}
\section{Prelimenary}

In this section, we review deterministic TO using Bernstein basis polynomials \cite{gao2018online}, which could rigorously guarantee that the generated trajectory is within the safe corridors.

\subsection{Safe Corridor Constraint}
The number of the planning dimension is denoted as $m$. 
In general, the safe corridor $\mathbb{S}_c$ is defined as one of the sub-sets of the passable region $\mathbb{S}_p \in \mathbb{R}^m$, and is the union set of a sequence of feasible regions $\mathbb{S}_i$, i.e.,
\begin{equation}
\label{safe_corridor}
\mathbb{S}_c \triangleq \bigcup\limits_{i=1}^{N} \mathbb{S}_i
\end{equation}
where the safe corridor consists of $N$ sub-feasible spaces $\mathbb{S}_i \in \mathbb{S}_p$. In particular, we assume that the sub-feasible spaces are cube-like, and the adjacent spaces have overlapping regions:
$$\mathbb{S}_i \cap \mathbb{S}_{i+1} \neq \emptyset, \forall i \in \{1,\cdots, N-1\},$$
$$
	\mathbb{S}_i \triangleq \{\mathbf{p} \vert \mathbf{p}\in \mathbb{R}^m, \mathbf{s}^l_i \leq \mathbf{p} \leq \mathbf{s}^u_i \},
$$
where $\mathbf{p}$ is the position in the motion space, $\mathbf{s}^l_i \in \mathbb{R}^m$ and $\mathbf{s}^u_i \in \mathbb{R}^m$ are the known lower and upper boundaries, respectively.
In practice, the sub-feasible spaces can be obtained from an initial path by sampling, graph search or other near-optimal methods \cite{gao2016online, gao2019flying,liu2017planning,gao2018online,chen2016online}.
The initial path is piece-wise linear and consists of $N+1$ collision-free path points, denoted as 
$$(\mathbf{p}_0,T_0), \cdots, (\mathbf{p}_i,T_i), \cdots, (\mathbf{p}_N,T_N)$$
where $\mathbf{p}_0$ is the initial position, $T_0$ is the start time,  $\mathbf{p}_i$ is the $i$-th intermediate point and $T_i$ is the corresponding arrival time, $\mathbf{p}_N$ is the end position, i.e., the goal position.
Moreover, the intermediate points are in the corresponding overlapping regions, i.e.,
$$
\mathbf{p}_i \in \left (\mathbb{S}_{i} \cap \mathbb{S}_{i+1} \right ), \ \forall i \in {1,\cdots,N-1}
$$
From the initial path, we define the $i$-th piece of trajectory as $B_i(t): \mathbb{R} \rightarrow \mathbb{R}^m$. Note that  $B_i(t)$ is related to the $i$-th sub-feasible region $\mathbb{S}_i$.
Then, if the following constraint holds for  $B_i(t)$, we can guarantee that $B_i(t)$ is collision-free:
\begin{equation}
\label{eq_safe_corridor_constraint}
B_i(t) \in \mathbb{S}_i,\ \forall t \in [T_{i-1},T_{i}],
\end{equation}

\subsection{Deterministic TO using Bernstein Basis Polynomial}
The trajectory $B_i(t)$ has $m$ dimensions, denoted separately by $B_{i,1}(t),\cdots, B_{i,m}(t)$, respectively.
Let $B_{i,\mu}(t)$ be an $n$-degree Bernstein basis polynomial, i.e., Bezier curve, for each $\mu \in \{1,\cdots,m\}$. Then $B_{i,\mu}(t)$ could be described by a set of control points:
\begin{equation}
\label{bezier_si}
B_{i,\mu}(t)=\sum \limits_{j=0}^n c_{i,\mu}^jb_n^j(t),
\end{equation}
where $c_{i,\mu}^j$ is the $j$-th control point for $B_i(t)$ in the $\mu$-th dimension, and $b_n^j(t)$ is the Bernstein polynomial basis:
\begin{equation*}
b_n^j(t) \triangleq \left ( \begin{array}{c}
n\\
j
\end{array} \right ) \cdot t^j \cdot (1-t)^{(n-j)},
\end{equation*}
where $t\in [0,1]$. The most attractive advantage is the convex hull property of Bezier curves, i.e., a Bezier curve is guaranteed to be within the convex hull of its control points. By leveraging this property, we can prove the following proposition.
\begin{proposition}
Suppose the trajectory $B_i(t)$ is a Bezier curve \eqref{bezier_si}.  Then, the constraint \eqref{eq_safe_corridor_constraint} holds if
\begin{equation}
\label{safe_constraint}
\mathbf{e}_\mu^\top {\mathbf{s}}^l_i \leq c_{i,\mu}^j \leq \mathbf{e}_\mu^\top {\mathbf{s}}^u_i, \ \forall \mu \in \{1,\cdots,m\}, j\in\{0,\cdots,n\},
\end{equation}
where $\mathbf{e}_\mu$ is the standard basis in $\mathbb{R}^m$ where the entry associated with the $\mu$-th dimension is one.
\end{proposition}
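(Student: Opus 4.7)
The plan is to prove the proposition directly via the convex hull property of Bezier curves, which the paper already highlights, but spelled out coordinate-by-coordinate so that the cube-like structure of $\mathbb{S}_i$ can be exploited. Recall that $\mathbb{S}_i = \{\mathbf{p} \in \mathbb{R}^m : \mathbf{s}_i^l \le \mathbf{p} \le \mathbf{s}_i^u\}$ is a rectangular box, so membership $B_i(t) \in \mathbb{S}_i$ is equivalent to the $m$ scalar conditions $\mathbf{e}_\mu^\top \mathbf{s}_i^l \le B_{i,\mu}(t) \le \mathbf{e}_\mu^\top \mathbf{s}_i^u$ holding simultaneously for every $\mu$ and every $t \in [T_{i-1},T_i]$. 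This reduces the problem to a one-dimensional statement that can be handled componentwise.

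First I would recall the two standard facts about the Bernstein basis on $[0,1]$: the non-negativity $b_n^j(t) \ge 0$ for all $t \in [0,1]$ and $j \in \{0,\dots,n\}$, and the partition-of-unity identity $\sum_{j=0}^n b_n^j(t) = 1$, which follows immediately from the binomial expansion of $(t + (1-t))^n$. Together these say that for each fixed $t$, the coefficients $\{b_n^j(t)\}_{j=0}^n$ form a set of convex combination weights. Then, using \eqref{bezier_si}, the scalar value $B_{i,\mu}(t) = \sum_{j=0}^n c_{i,\mu}^j\, b_n^j(t)$ is a convex combination of the control-point coordinates $c_{i,\mu}^0,\dots,c_{i,\mu}^n$.

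Next I would invoke the hypothesis \eqref{safe_constraint}: for every $j$, $\mathbf{e}_\mu^\top \mathbf{s}_i^l \le c_{i,\mu}^j \le \mathbf{e}_\mu^\top \mathbf{s}_i^u$. Substituting this into the convex combination, I would bound
\begin{equation*}
B_{i,\mu}(t) \le \sum_{j=0}^n (\mathbf{e}_\mu^\top \mathbf{s}_i^u)\, b_n^j(t) = \mathbf{e}_\mu^\top \mathbf{s}_i^u,
\end{equation*}
and symmetrically $B_{i,\mu}(t) \ge \mathbf{e}_\mu^\top \mathbf{s}_i^l$. Because the parameterization in \eqref{bezier_si} is understood with $t$ rescaled so that it ranges over $[0,1]$ as the physical time sweeps $[T_{i-1},T_i]$, these bounds hold uniformly in $t \in [T_{i-1},T_i]$. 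Doing this for every $\mu \in \{1,\dots,m\}$ yields $B_i(t) \in \mathbb{S}_i$ for all $t \in [T_{i-1},T_i]$, which is precisely the safe-corridor constraint \eqref{eq_safe_corridor_constraint}.

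I do not anticipate a serious obstacle here: the argument is essentially a one-line application of the convex hull property. The only point requiring mild care is making the time-rescaling between $[0,1]$ and $[T_{i-1},T_i]$ explicit so that the conclusion is stated over the correct interval, and remembering that cube-like (axis-aligned box) feasible sets are exactly those for which convex-hull containment reduces to independent per-coordinate bounds on the control points, which is what makes the bound \eqref{safe_constraint} both necessary-looking and sufficient.
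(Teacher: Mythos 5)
Your proof is correct and follows essentially the same route as the paper, which simply invokes the convex hull property of Bezier curves without spelling it out; you make that property explicit via the non-negativity and partition-of-unity of the Bernstein basis and exploit the axis-aligned box structure of $\mathbb{S}_i$ to reduce to per-coordinate bounds. No gaps.
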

 \begin{remark} \label{SCC_remark} (Safe Corridor Constraint, SCC) In this paper, we call the constraints \eqref{safe_constraint} as Safe Corridor Constraints (SCCs). The above proposition shows that we can use the constraints \eqref{safe_constraint} to guarantee the safety of $B_i(t)$, instead of the originally intractable constraint \eqref{eq_safe_corridor_constraint}. Moreover, it is clear that if we take $c_{i,\mu}^j$ as a decision variable, the constraint \eqref{safe_constraint} is a linear constraint and can be incorporated into TO. It is worth noting that, both $s_{i,\mu}^l$ and $s_{i,\mu}^u$ are known exactly here. Therefore, the constraint  \eqref{safe_constraint} is indeed \textbf{deterministic}. In the next section, we will introduce the distributionally robust counterpart of \eqref{safe_constraint}.
\end{remark}
Given the initial path, the trajectory optimization problem for mobile robots within \emph{deterministic} SCCs could be formulated as follows. Please see \cite{gao2018online} for a detailed explanation.
\begin{subequations}
\label{nominal_QP}
\begin{align}
\displaystyle \min
_{\mathbf{c}} 
& \quad \mathbf{c}^\top \mathbf{Q} \mathbf{c}, \label{QP_obj} \\
{\rm{s.t.}}  & \quad \mathbf{e}_\mu^\top {\mathbf{s}}^l_i \leq c_{i,\mu}^j \leq \mathbf{e}_\mu^\top {\mathbf{s}}^u_i,  \label{SCC in TO} \\
& \quad a_{i,\mu}^{0,j}=c_{i,\mu}^{j}, \label{new_a_1}\\
& \quad a_{i,\mu}^{l,j}=\frac{n!}{(n-l)!}\cdot (a_{i,\mu}^{l-1,i+1}-a_{i,\mu}^{l-1,i}), \label{new_a_2}\\
&\quad  a_{1,\mu}^{l,0} \cdot (\tau_i)^{1-l} = d^{(l)}_{1,\mu}, \label{waypoint_cons_0}\\
&\quad  a_{N,\mu}^{l,n} \cdot (\tau_i)^{1-l} = d^{(l)}_{N,\mu}, \label{waypoint_cons_1}\\
&\quad a_{i,\mu}^{\phi,n}\cdot (\tau_i)^{1-\phi}= a_{i+1,\mu}^{\phi,0}\cdot (\tau_{i+1})^{1-\phi}, \label{way_point_cons_2} \\
& \quad a_{\min}^g\leq a_{\mu,i-1}^{g,j}(\tau_i^{1-\phi}) \leq a_{\max}^g, \label{limit_constraints}
\end{align}
\end{subequations}
where  $\mathbf{c} \triangleq [c_{1,x}^0,\cdots,c_{N,x}^n,c_{1,y}^0,\cdots,c_{N,y}^n,c_{1,z}^0,\cdots,c_{N,z}^n]^\top$ is the vector of all control points for all the Bezier pieces, the positive definite matrix $\mathbf{Q}$ is calculated based on \cite{mellinger2011minimum}, $d^{(l)}_{1,\mu}$ and $d^{(l)}_{N,\mu}$ are the given initial and end states where $l$ denotes the $l$-order derivative, $a_{\min}^g$ and $a_{\max}^g$ are the lower and upper bounds for the $g^{\rm th}$ order derivative of the corresponding piece trajectory, respectively, $j\in\{0,\cdots,n\}$, $i \in \{1, \cdots,N\}$, $\mu \in \{ 1,\cdots,m \}$, $l \in \{ 0,\cdots,n \}$, $\phi \in \{ 0,\cdots,k-1 \}$, $g \in \{ 1,\cdots,k-1 \}$.

In Problem \eqref{nominal_QP}, the objective function is convex and quadratic, and the constraints are all linear. Therefore, it is indeed a convex quadratic program and can be solved efficiently for robotic applications by many off-the-shelf solvers.

\section{Main Method}
 
In this section, we introduce DRSCC-TO in detail. In Section \ref{sec:corridor with dro}, we establish a formal framework for defining a safe corridor in the presence of distributional ambiguity.. In Section \ref{sec:drscc to}, we incorporate the uncertain safe corridor into trajectory optimization and prove that the corresponding optimization problem is still a convex quadratic program.

\subsection{Uncertain Safe Corridor with Distributional Ambiguity}\label{sec:corridor with dro}
As discussed in Remark \ref{SCC_remark}, the SCCs \eqref{safe_constraint} indeed require the true values of $\mathbf{s}^l_i$ and $\mathbf{s}^u_i$. However,  it is difficult (if not impossible) to obtain true values due to the errors of robotic sensors or perception algorithms. Therefore, to model the uncertainty of the safe corridor and distinguish from the deterministic case, we take them as random vectors and denote as $\tilde{\mathbf{s}}^l_i$ and $\tilde{\mathbf{s}}^u_i$, with their distributions as $\mathbb{P}^l_i$ and $\mathbb{P}^u_i$, respectively.
However, it is still challenging to measure the exact distributions   $\mathbb{P}^l_i$ and $\mathbb{P}^u_i$. A popular assumption in probabilistic robotics \cite{thrun2002probabilistic} is to take the distribution of the sensor noise as a Gaussian distribution, but this assumption is too restrictive and possibly inaccurate. 

Instead of a specific distribution, in this paper, we assume that $\mathbb{P}^l_i$ and $\mathbb{P}^u_i$ are \emph{unknown} exactly. In particular, we assume that $\mathbb{P}^l_i$ and $\mathbb{P}^u_i$ reside in the \emph{ambiguity sets} $\mathcal{F}(\hat{\mathbb{P}}^l_i,\theta^l_i)$ and $\mathcal{F}(\hat{\mathbb{P}}^u_i,\theta^u_i)$, respectively, both of which are the Wasserstein balls based on the elliptical reference distributions.
\begin{definition} (Wasserstein distance with  Mahalanobis norm) \\
	The Wasserstein distance between two arbitrary distributions, $\mathbb{P}_1$ and $\mathbb{P}_2$, is defined as
$$
d_{\rm{W}}(\mathbb{P}_1,{\mathbb{P}}_2) = \inf_{\mathbb{P} \in \mathcal{J}(\mathbb{P}_1,\mathbb{P}_2)} \mathbb{E}_{\mathbb{P}}(\Vert \tilde{\mathbf{s}}_1-\tilde{\mathbf{s}}_2 \Vert_{\mathbf{\Sigma}^h_i}).
$$
Here, $\mathcal{J}(\mathbb{P}_1,\mathbb{P}_2)$ is the set of all joint distributions with marginal distributions $\mathbb{P}_1, \mathbb{P}_2$ that govern $\tilde{\mathbf{s}}_1$ and $\tilde{\mathbf{s}}_2$, respectively. The Mahalanobis norm associated with $\mathbf{\Sigma} \succ \mathbf{0}$ 
is $\Vert\mathbf{x}\Vert_{\mathbf{\Sigma}}=\sqrt{\mathbf{x}^\top\mathbf{\Sigma}^{-1}\mathbf{x}}$.
\end{definition}
\begin{assumption}
\label{distributional_ambiguity}
The distributions of $\mathbb{P}^l_i$ and $\mathbb{P}^u_i$ reside in Wasserstein balls with elliptical reference distributions, i.e.,
\begin{equation}\label{prob:wasserstein set}
\begin{array}{l}
\mathbb{P}^h_i\in\mathcal{F}(\hat{\mathbb{P}}^h_i,\theta^h_i) = \{ \mathbb{P}^h_i \in \mathcal{P}(\mathbb{R}^m) \;\vert\; d_{\rm{W}}(\hat{\mathbb{P}}^h_i, \mathbb{P}^h_i)\leq \theta^h_i \}\\
\hfill \forall h \in \{l,u\},
\end{array}
\end{equation}
where $\mathcal{P}(\mathbb{R}^m)$ is the set of all probability distributions on $\mathbb{R}^m$, the reference distribution $\hat{\mathbb{P}}^h_i = \mathbb{P}_{(\pmb{\mu}^h_i,\pmb{\Sigma}^h_i,g^h_i)}$  is a pre-known elliptical reference distribution with a mean vector $\pmb{\mu}^h_i$, a positive definite matrix $\pmb{\Sigma}^h_i$ and a generating function $g(\cdot)$, and $\theta^h_i$ is the pre-known radius for $\mathcal{F}(\hat{\mathbb{P}}^h_i,\theta^h_i)$. 
\end{assumption} 

We emphasize that Assumption \ref{distributional_ambiguity} is mild because the distributions $\mathbb{P}^l_i$ and $\mathbb{P}^u_i$ under Assumption \ref{distributional_ambiguity} are distributionally ambiguous, which are less conservative than assuming a known distribution. Note that in particular, knowing the true distributions $\mathbb{P}^l_i$ and $\mathbb{P}^u_i$ is a special case of Assumption \ref{distributional_ambiguity} (by considering the Wasserstein ball as a singleton that only contains the true distribution, i.e., the reference elliptical distribution). Although the elliptical reference distribution $\hat{\mathbb{P}}^h_i$ is still required to be specified as a prior,  the member distributions in the Wasserstein ball can be any type of distribution  \cite{villani2009optimal}.  

\subsection{DRSCC-TO}\label{sec:drscc to}
Now we are ready to incorporate the distributional ambiguity of the safe corridor into the TO framework.
We leverage the idea of distributionally robust chance constraint, that is, even in the worst-case distribution, the $i$-th piece of the trajectory $B_i(t)$, which is a Bezier curve, is still in the safe corridor with high confidence $1-\epsilon$. Mathematically, we define the Distributioanllay Robust Safe Corridor Constraints (DRSCCs) for $B_i(t)$ as follows:

\begin{definition}
(DRSCC) Suppose Assumption \ref{distributional_ambiguity} holds. We define the distributionally robust counterpart of the nominal SCC \eqref{safe_constraint} as DRSCC:
\begin{subequations}
\label{drscc_original}
\begin{align}
\displaystyle \mathbb{P}_i^l(\mathbf{e}_\mu^\top \tilde{\mathbf{s}}^l_i \leq c^j_{i,\mu}) \geq 1-\epsilon^l_i, & \quad \forall \mathbb{P}_i^l \in \mathcal{F}(\hat{\mathbb{P}}^l_i,\theta^l_i), \label{drcc_original_l_low}\\
\displaystyle \mathbb{P}_i^u(\mathbf{e}_\mu^\top \tilde{\mathbf{s}}^u_i \geq c^j_{i,\mu}) \geq 1-\epsilon^u_i, & \quad \forall \mathbb{P}_i^u \in \mathcal{F}(\hat{\mathbb{P}}^u_i,\theta^u_i), \label{drcc_original_l_up}
\end{align}
\end{subequations}
where $\forall \mu \in \{1,\cdots,m\}$,$ \quad \forall j \in \{0,\cdots,n\}$, and the risk thresholds $\epsilon^l_i, \epsilon^u_i \in(0,0.5)$.
 \end{definition}

Note that our distributionally robust chance constraints do not assume the type of the distributions are specified. Moreover, we leverage the distributionally robust optimality to define DRSCC, that is, the chance constraints hold for all the distributions in the ambiguity sets. It is obvious that, if the ambiguity set comprises a single distribution, the DRSCC is simplified to a chance constraint. Therefore, chance-constrained methods can be seen as a special case of our method.
By replacing the original SCC constraints \eqref{SCC in TO} in \eqref{nominal_QP} with DRSCC \eqref{drscc_original}, we obtain the optimization problem for DRSCC-TO:
\begin{subequations}
\label{robust_QP_original}
\begin{align}
\displaystyle \min_{\mathbf{c}} & \quad \mathbf{c}^\top \mathbf{Q} \mathbf{c},  \\
{\rm{s.t.}} 
&\quad  \eqref{drscc_original}, \\
& \quad \eqref{new_a_1}-\eqref{limit_constraints}.
\end{align}
\end{subequations}
Clearly, Problem \eqref{robust_QP_original} cannot be solved directly because it involves infinitely many constraints in \eqref{drscc_original}. Interestingly, we can prove that it admits a tractable reformulation. Before that, we first define a lower risk threshold for DRSCC:
\begin{equation}
\label{underline_epsilon}
	\underline{\epsilon}^h_i=1-\Phi^h_i(\eta^*)
\end{equation} 
where $h \in \{l,u\}$, $\Phi^h_i$ is the cumulative distribution function for the reference elliptical distribution $\hat{\mathbb{P}}^h_i$, and
\begin{equation}
\label{eta}
\begin{array}{rl}
\eta^* \triangleq 	\min &\eta, \\
{\rm{s.t.}}&\eta \geq (\Phi^h_i)^{-1}(1-\epsilon^h_i), \\
&\eta(\Phi^h_i(\eta)-(1-\epsilon^h_i))- \kappa_i^h \geq \theta^h_i, \\
&\kappa_i^h =\int_{\frac{1}{2}((\Phi^h_i)^{-1}(1-\epsilon^h_i))^2}^{\eta^2/2}k^h_ig^h_i(z){\rm{d}}z.\end{array}
\end{equation}
where $k^h_i$ and $g^h_i(\cdot)$ are the normalizing constant and the generation function for the distribution $\hat{\mathbb{P}}^h_i$, respectively.
Then we have the main theoretical result of this paper:
\begin{theorem}
Under Assumption~\ref{distributional_ambiguity}, the DRSCC-TO optimization problem \eqref{robust_QP_original} is equivalent to
\begin{subequations}
\label{robust_QP_final}
\begin{align}
\displaystyle \min
_{\mathbf{c}} 
& \quad \mathbf{c}^\top \mathbf{Q} \mathbf{c},  \\
{\rm{s.t.}} 
&\quad  c^j_{i,\mu} \geq \mathbf{e}_\mu^\top\pmb{\mu}^l_i + \sqrt{\mathbf{e}^\top_\mu \mathbf{\Sigma}_i^l \mathbf{e}_\mu} (\Phi^l_i)^{-1}(1-\underline{\epsilon}^l_i), \\
&\quad  c^j_{i,\mu} \leq \mathbf{e}_\mu^\top\pmb{{\mu}}^u_i - \sqrt{\mathbf{e}^\top_\mu \mathbf{\Sigma}_i^u \mathbf{e}_\mu} (\Phi^u_i)^{-1}(1-\underline{\epsilon}^u_i), \label{final_lower_bound}\\
& \quad \eqref{waypoint_cons_0}-\eqref{limit_constraints}.
\end{align}
\end{subequations}
\end{theorem}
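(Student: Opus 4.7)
The plan is to reduce the proof to showing that each constraint in the DRSCC family \eqref{drscc_original} admits a deterministic linear reformulation of the form appearing in \eqref{robust_QP_final}; everything else in the two problems (the objective $\mathbf{c}^\top\mathbf{Q}\mathbf{c}$ and the constraints \eqref{waypoint_cons_0}-\eqref{limit_constraints}) is already identical, so equivalence will then follow constraint-by-constraint. I would treat the lower-bound DRSCC \eqref{drcc_original_l_low} for a fixed triple $(i,\mu,j)$ first; the upper-bound constraint \eqref{drcc_original_l_up} is obtained by a symmetric argument after replacing $\tilde{\mathbf{s}}^l_i$ by $-\tilde{\mathbf{s}}^u_i$ and flipping the half-space orientation.

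The heart of the argument is a two-step reduction. First, I would use the projection property of elliptical distributions: under the reference $\hat{\mathbb{P}}^l_i = \mathbb{P}_{(\pmb{\mu}^l_i,\pmb{\Sigma}^l_i,g^l_i)}$, the scalar random variable $\mathbf{e}_\mu^\top\tilde{\mathbf{s}}^l_i$ is again elliptical with location $\mathbf{e}_\mu^\top\pmb{\mu}^l_i$, scale $\sqrt{\mathbf{e}_\mu^\top\pmb{\Sigma}^l_i\mathbf{e}_\mu}$, and standardized CDF $\Phi^l_i$; hence a standard chance constraint under $\hat{\mathbb{P}}^l_i$ at level $1-\alpha$ reduces to $c^j_{i,\mu}\geq \mathbf{e}_\mu^\top\pmb{\mu}^l_i + \sqrt{\mathbf{e}_\mu^\top\pmb{\Sigma}^l_i\mathbf{e}_\mu}\,(\Phi^l_i)^{-1}(1-\alpha)$. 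Second, I would invoke the tractable reformulation of a Wasserstein distributionally robust chance constraint over a half-space with an elliptical reference (as developed in the Wasserstein DRO literature, e.g. Chen--Kuhn--Wiesemann and its extensions to elliptical references): the worst-case probability $\inf_{\mathbb{P}\in\mathcal{F}(\hat{\mathbb{P}}^l_i,\theta^l_i)}\mathbb{P}(\mathbf{e}_\mu^\top\tilde{\mathbf{s}}^l_i\leq c^j_{i,\mu})\geq 1-\epsilon^l_i$ is equivalent to the same chance constraint under the reference distribution $\hat{\mathbb{P}}^l_i$ at the tightened risk level $\underline{\epsilon}^l_i$ defined in \eqref{underline_epsilon}. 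Combining these two steps yields exactly the linear constraint claimed in \eqref{robust_QP_final}.

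The step I expect to be the main obstacle is justifying the tightened level \eqref{underline_epsilon}-\eqref{eta}. This requires dualizing the inner infimum with Kantorovich--Rubinstein duality under the Mahalanobis-weighted transport cost, localizing the worst-case mass transfer onto a half-space boundary parameterized by a single scalar $\eta$, and identifying the optimal radial displacement using the elliptical generator $g^l_i$. Once parameterized, the two constraints in \eqref{eta} correspond to (i) requiring $\eta$ to dominate the nominal quantile $(\Phi^l_i)^{-1}(1-\epsilon^l_i)$ so that the tightened chance level is feasible, and (ii) requiring the transport budget $\eta\bigl(\Phi^l_i(\eta)-(1-\epsilon^l_i)\bigr) - \kappa^l_i$ (mass moved times Mahalanobis distance, minus the radial correction $\kappa^l_i$ coming from integrating $k^l_i g^l_i$ in the elliptical density) to exceed $\theta^l_i$. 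Minimizing $\eta$ subject to these conditions yields $\eta^*$, hence $\underline{\epsilon}^l_i = 1-\Phi^l_i(\eta^*)$. Once this correspondence is established, applying it separately to \eqref{drcc_original_l_low} and \eqref{drcc_original_l_up} (the latter giving the sign-flipped constraint \eqref{final_lower_bound}), and combining with the unchanged constraints \eqref{waypoint_cons_0}-\eqref{limit_constraints} and objective, finishes the proof; convexity of \eqref{robust_QP_final} is then immediate since every new constraint is linear in $\mathbf{c}$ and the quadratic $\mathbf{c}^\top\mathbf{Q}\mathbf{c}$ is already convex.
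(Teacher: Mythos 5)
Your proposal is correct and follows essentially the same route as the paper: both reduce each DRSCC to a nominal chance constraint under the elliptical reference at the tightened risk level $\underline{\epsilon}^h_i$ (the paper routes explicitly through a worst-case value-at-risk reformulation and cites Corollary 4.9 of \cite{chen2021sharing} for this step, where you instead sketch the underlying duality argument), and then both apply the projection/linearity property of elliptical distributions to obtain the linear constraints in \eqref{robust_QP_final}. The only detail the paper handles that you omit is the replacement of the strict inequality by a weak one when passing to the worst-case probability (via Proposition 3 of \cite{gao2022distributionally}), which is a minor technicality.
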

\begin{proof}
	Please see the Appendix.
\end{proof}
Note that the problem \eqref{robust_QP_final} is still a convex QP. Therefore, similar to the nominal TO \eqref{nominal_QP}, we could also solve our DRSCC-TO \eqref{robust_QP_final} in real-time for robotic applications. 

\section{Results and Discussion}
In this section, we conduct both simulation and real robot tests to evaluate our approach. Please see our complementary video for more implementation details. \footnote{See also \url{www.xushaohang.top/home/drscc-to}.}
\subsection{Numerical Simulation}
\begin{table*}[t]
\caption{Numerical Simulation Results}
\centering
\begin{tabular}{c|c|c|c|c|c|c|c}
\hline
 \multicolumn{1}{c|}{}& \textbf{Nominal TO} &\multicolumn{6}{|c}{\textbf{DRSCC-TO}} \\
\hline
\hline
\multirow{2}{*}{\textbf{Normal Distribution}} & \multirow{2}{*}{\diagbox[width=8em]{\ }{\ }} &  \multicolumn{3}{c|}{${\theta=0.05}$} & \multicolumn{3}{c}{${\theta=0.1}$} \\
\cline{3-8}
& & ${\epsilon=0.1}$& ${\epsilon=0.15}$  & ${\epsilon=0.25}$ & ${\epsilon=0.1}$& ${\epsilon=0.15}$  & ${\epsilon=0.25}$ \\
\cline{1-3}\cline{4-8}
\textbf{Optimal Objective}& 1 & 2.196 & 1.871 &  1.619 & 2.544 & 2.009 & 1.732 \\
\cline{1-8}
\textbf{Number of Violation}  & 2478 & 129 & 215 & 410 & 96 & 151& 292\\
\hline
\hline
\multirow{2}{*}{\textbf{t-Distribution}} & \multirow{2}{*}{\diagbox[width=8em]{\ }{\ }} &  \multicolumn{3}{c| }{${\theta=0.05}$} & \multicolumn{3}{c}{${\theta=0.1}$} \\
\cline{3-8}
& & ${\epsilon=0.1}$& ${\epsilon=0.15}$  & ${\epsilon=0.25}$ & ${\epsilon=0.1}$& ${\epsilon=0.15}$  & ${\epsilon=0.25}$ \\
\cline{1-3}\cline{4-8}
\textbf{Optimal Objective}& 1 & 9.022 & 4.378 &  2.217 & 13.515 & 5.560 & 2.649 \\
\cline{1-8}
\textbf{Number of Violation}  & 5739 & 767 & 798 & 1456 & 755 & 775& 1260\\
\hline
\hline
\multirow{2}{*}{\textbf{Logistic Distribution}} & \multirow{2}{*}{\diagbox[width=8em]{\ }{\ }} &  \multicolumn{3}{c| }{${\theta=0.05}$} & \multicolumn{3}{c}{${\theta=0.1}$} \\
\cline{3-8}
& & ${\epsilon=0.1}$& ${\epsilon=0.15}$  & ${\epsilon=0.25}$ & ${\epsilon=0.1}$& ${\epsilon=0.15}$  & ${\epsilon=0.25}$ \\
\cline{1-3}\cline{4-8}
\textbf{Optimal Objective}& 1 & 2.266 & 1.934 &  1.568 & 2.553 & 2.117 &1.697 \\
\cline{1-8}
\textbf{Number of Violation}  & 5718 & 5 & 60 & 1302 & 5 & 9& 482\\
\hline

\end{tabular}
\end{table*}

In numerical simulation, we compare our DRSCC-TO method with the nominal TO method in terms of optimal costs and violation count. We focus on 2D minimum snap trajectory optimization \cite{mellinger2011minimum} and proceed as follows.

First, we establish three nominal safe corridors, which are known for both nominal TO and DRSCC-TO. For DRSCC-TO, we employ different reference distributions: normal distribution, t-distribution, and logistic distribution. For these ellipsoidal distributions, the mean vectors align with corner positions of nominal safe corridors, and the covariance matrices are $\sigma I$, with $\sigma$ values being $2$, $1$, and $1$ for each distribution. Then, we formulate the problems \eqref{nominal_QP} and \eqref{robust_QP_final} for nominal TO and DRSCC-TO. Solving them produces optimized trajectories. DRSCC-TO employs consistent Wasserstein distance and confidence level for each sub-region and dimension. We compare six parameter sets for DRSCC-TO ($d \in \{ 0.05, 0.1\}$, $\epsilon \in \{ 0.1, 0.15, 0.25 \}$). Finally, nominal safe corridors are perturbed to assess method robustness.  We sample $\psi_1$ from the prescribed reference distribution and $\psi_2$ from a uniform distribution. The instance $\psi = (1-\alpha) \psi_1 + \alpha \psi_2$ is obtained using weight $\alpha \in [0,1]$, with five $\alpha$ values and 2000 instances per value, resulting in 10000 \emph{perturbed} safe corridors in each test.

We calculate optimal costs for both methods across different demonstrations and the results are summarized in Table I. DRSCC-TO exhibits significantly fewer violations than nominal TO. Especially in \emph{Case III}, DRSCC-TO with logistic distribution ($\theta=0.5$, $\epsilon=0.1$) rarely shows infeasible motions compared to nominal TO. Also, optimal cost decreases with smaller $d$ or larger $\epsilon$, but violations increase. In conclusion, our method improves safety by reducing violations when safe corridors are perturbed.

The trade-off is that our method's optimal cost exceeds nominal TO's. This implies our method's control inputs are more energetic. However, safety generally takes precedence over efficiency for autonomous robots, as infeasible motions risk hardware damage (e.g., micro drones colliding with obstacles).  

\subsection{Robotic Applications}
In this section, we further validate our method on a micro UAV and a quadruped robot to show the practical viability of our method. 

\subsubsection{UAV}
The UAV in this test is shown in Fig. \ref{fig:robot}. The high-level trajectory optimizer and tracking controller are implemented on a mini PC with Intel i7 CPU, and the low-level controller runs on a PX4 flight controller. Moreover, we leverage the inertial measurement unit in the PX4 controller, the depth camera RealSense D435i and the VINS algorithm \cite{qin2018vins} to locate the robot in the 3D space.
\begin{figure}
\centering
\includegraphics[width=8.5 cm]{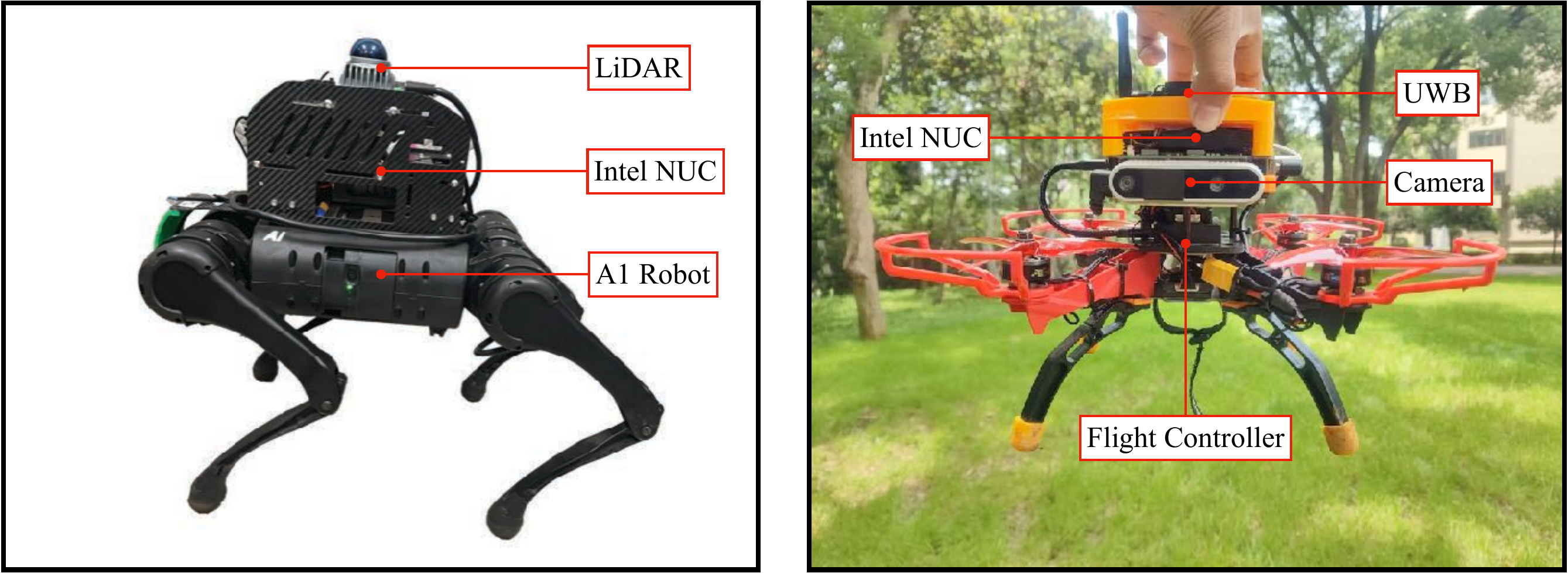}
\caption{The robots used in our experiments: the quadruped robot Unitree A1 (left) and a micro UAV (right). }
\label{fig:robot}
\end{figure}
In this test, we manually put some static obstacles in the scene, and input the perfect perception information (i.e., the safe corridor) into the TO method to obtain the optimized trajectory. 
We utilize Gaussian distribution as the reference distribution for all the sub-feasible regions. The mean vector of the Gaussian distributions is coincident
 with the perfect safe corridor, and the variance  0.4. Moreover, we set $\epsilon = 0.25$ and $\theta = 0.1$ to balance the safety and the efficiency.
\begin{figure}
\centering
\subfigure[]{
\includegraphics[angle=90,width=6cm]{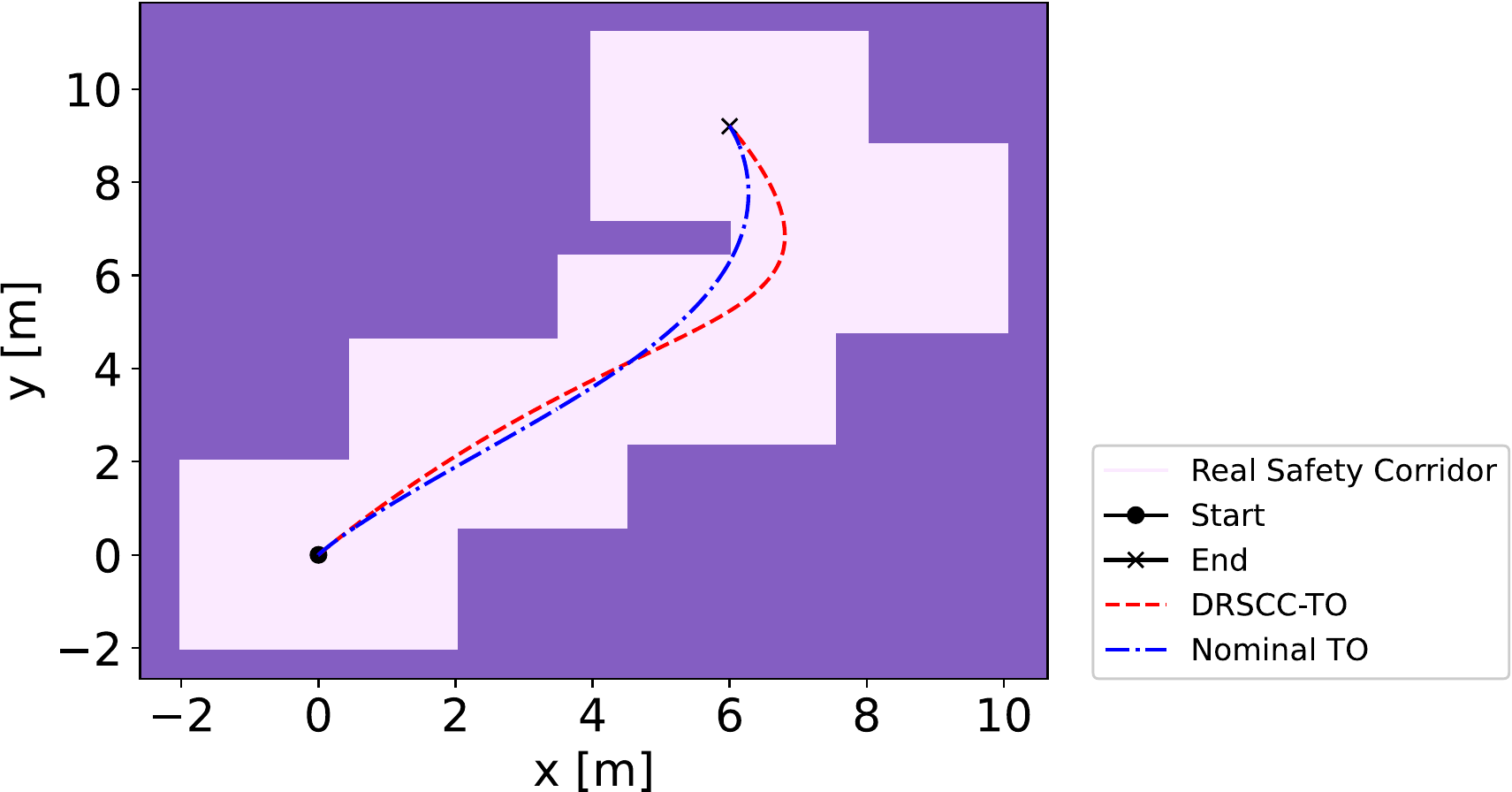}
\label{fig:uav_compare}
}
\subfigure[]{
\includegraphics[width=4cm]{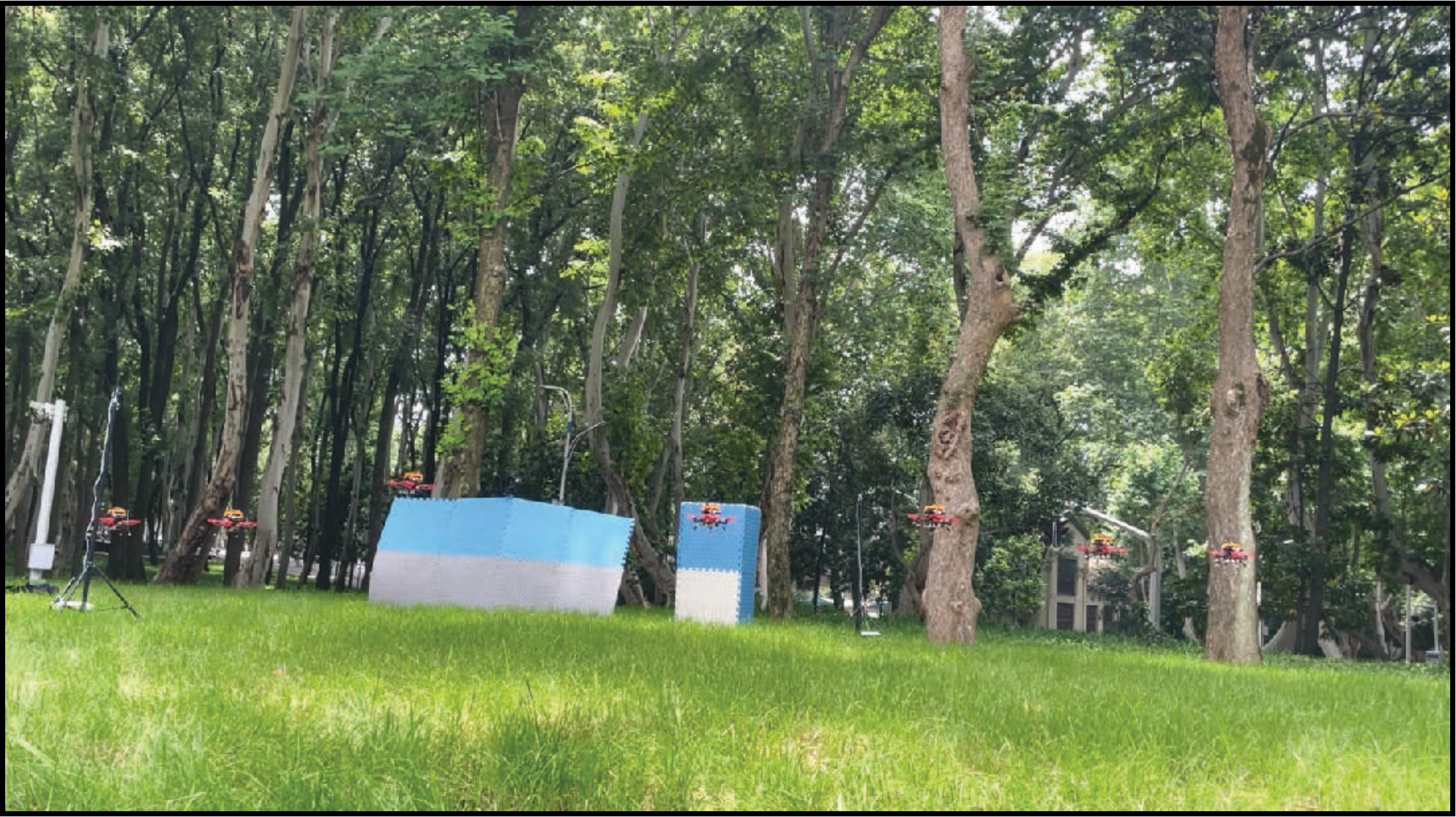}
\label{fig:uav_nominal}
}
\subfigure[]{
\includegraphics[width=4cm]{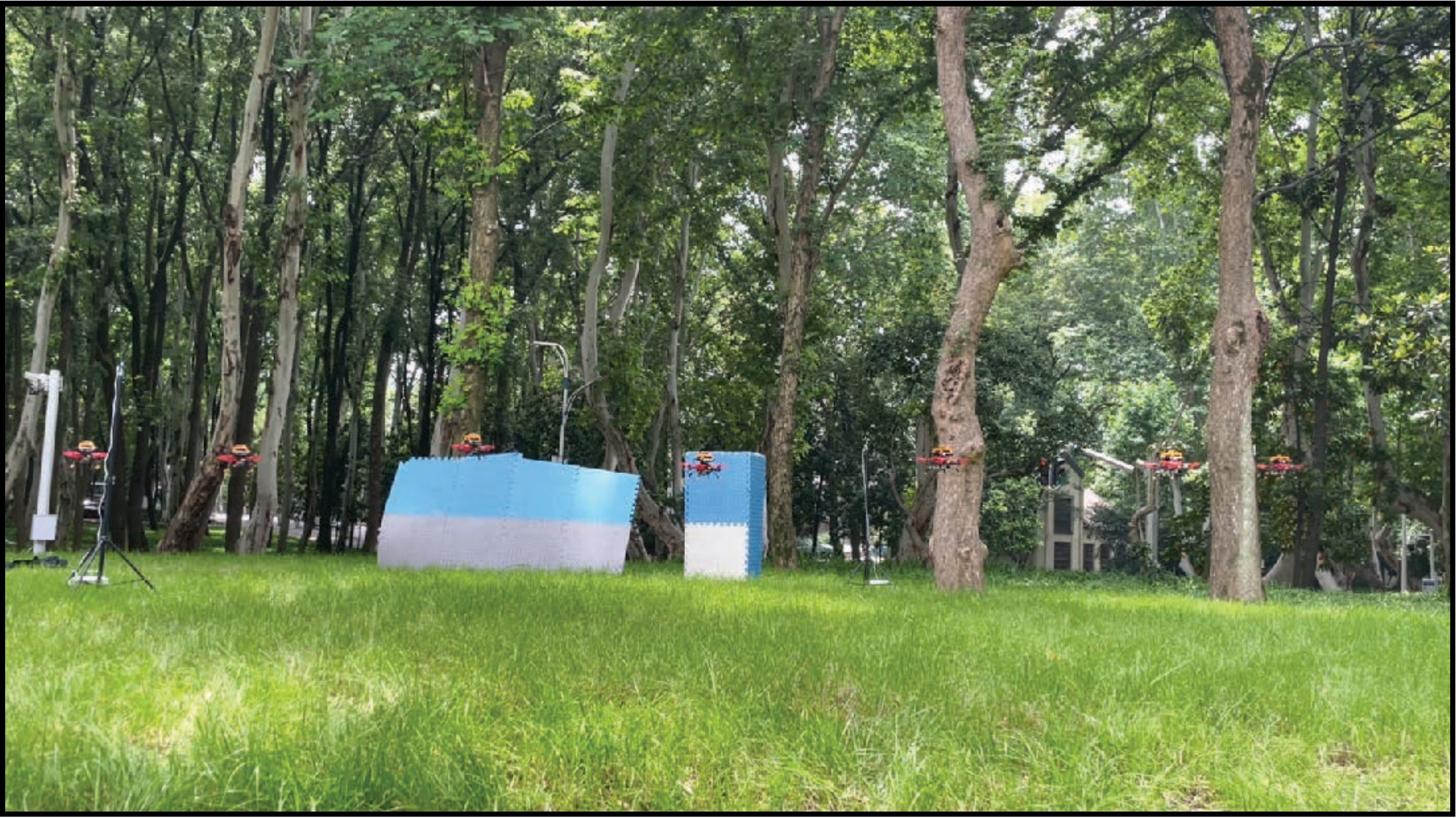}\label{fig:uav_robust}
}
\caption{Experimental Results on our UAV. (a) Comparison of the optimized trajectories. (b) Nominal TO with the exact safe corridor. (c) DRSCC-TO with the exact safe corridor. Both methods are collision-free with the exact safety corridor, although the optimized trajectory of the nominal TO is closer to the boundary.}
\label{fig:uav_experiment}
\end{figure}

The results are shown in Fig. \ref{fig:uav_experiment}.    In Fig. \ref{fig:uav_compare}, two optimized trajectories in the x-y plane are plotted, which are obtained from the nominal TO and DRSCC-TO, respectively. It is clear that the trajectory from the nominal TO is closer to the boundary of the safe corridor than that from DRSCC-TO. The experimental results are shown in Fig. \ref{fig:uav_nominal} and \ref{fig:uav_robust}. It could be seen that when the safe corridor is deterministic without any disturbance, both methods are collision-free with the same flight controller and hardware. Intuitively, the nominal TO will have better efficiency because it generates a shorter trajectory compared to DRSCC-TO. 
The nominal TO is more sensitive to the safe corridor information than DRSCC-TO because the trajectory could immediately become infeasible if the safe corridor is disturbed by the perception errors.

\subsubsection{Quadruped Robot}

\begin{figure}
\centering
\subfigure[]{
\includegraphics[width=8cm]{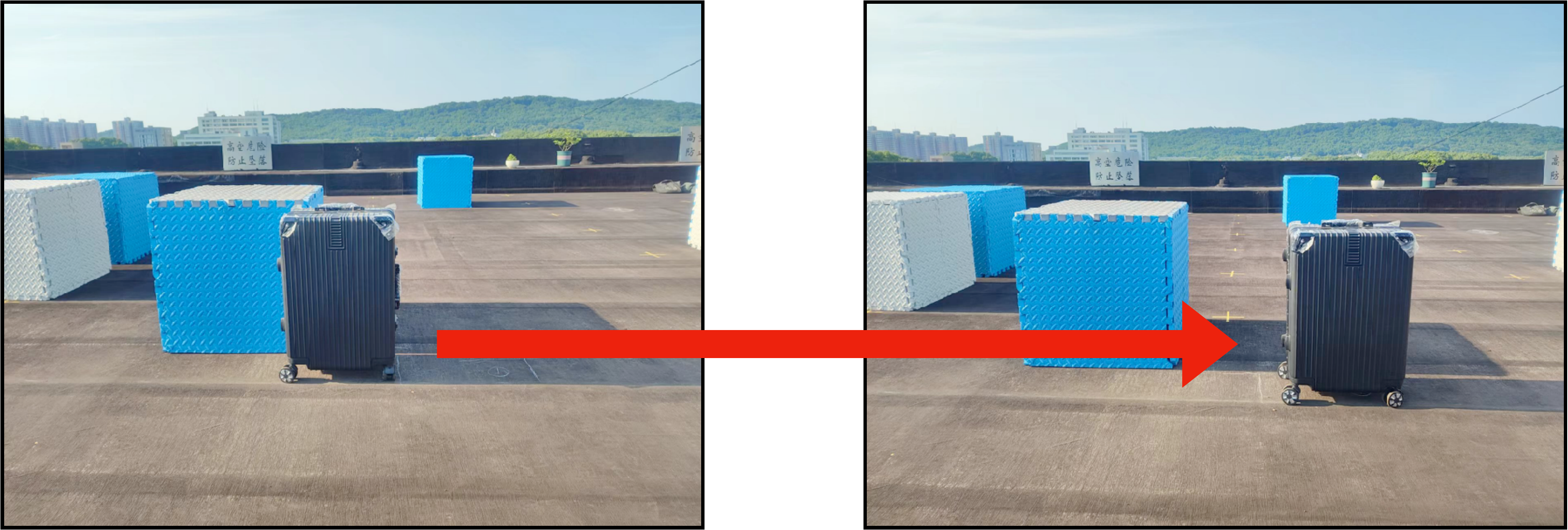}
\label{fig:moving_box}
}
\subfigure[]{
\includegraphics[width=4cm]{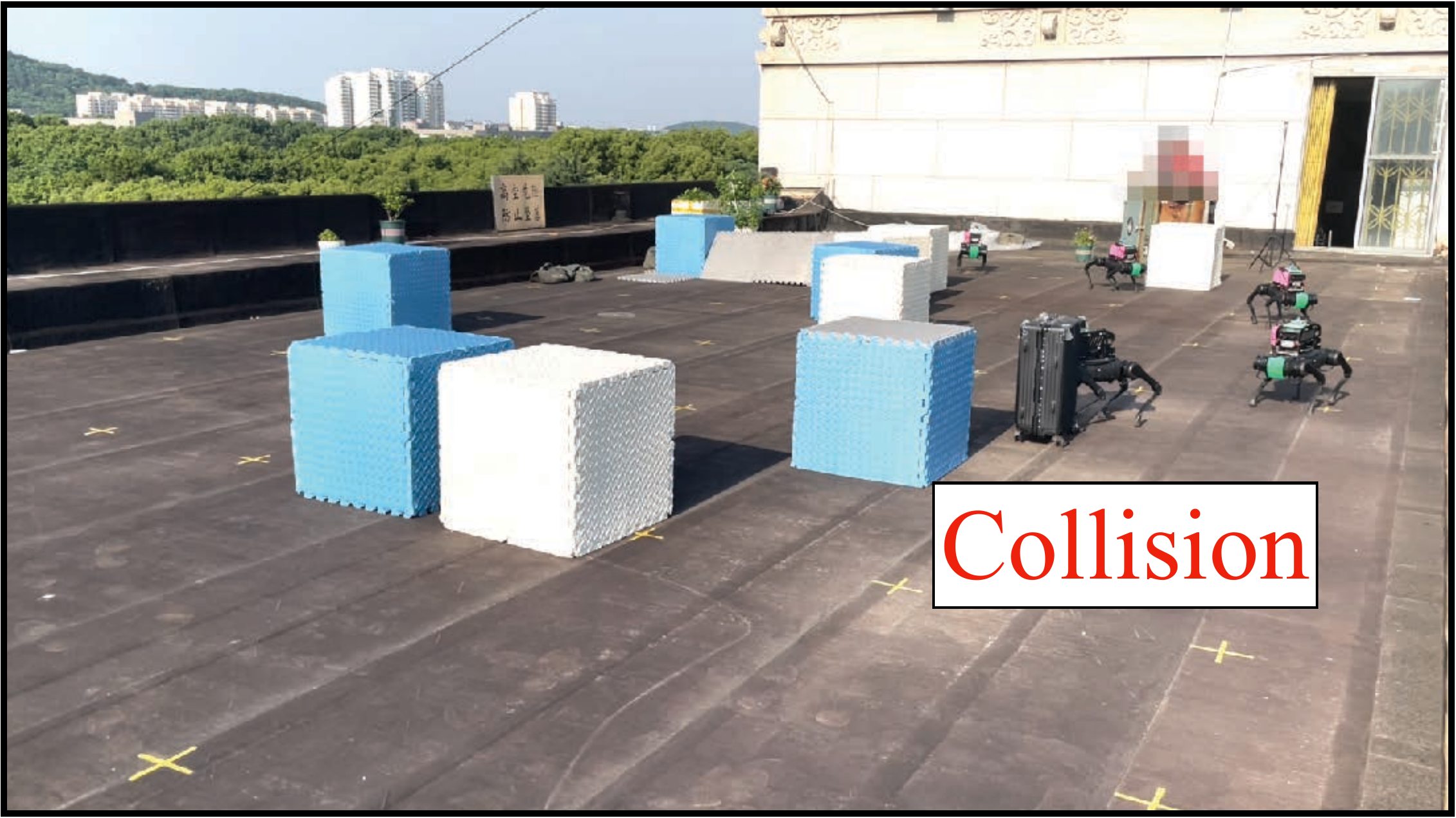}\label{fig:nominal_disturb}
}
\subfigure[]{
\includegraphics[width=4cm]{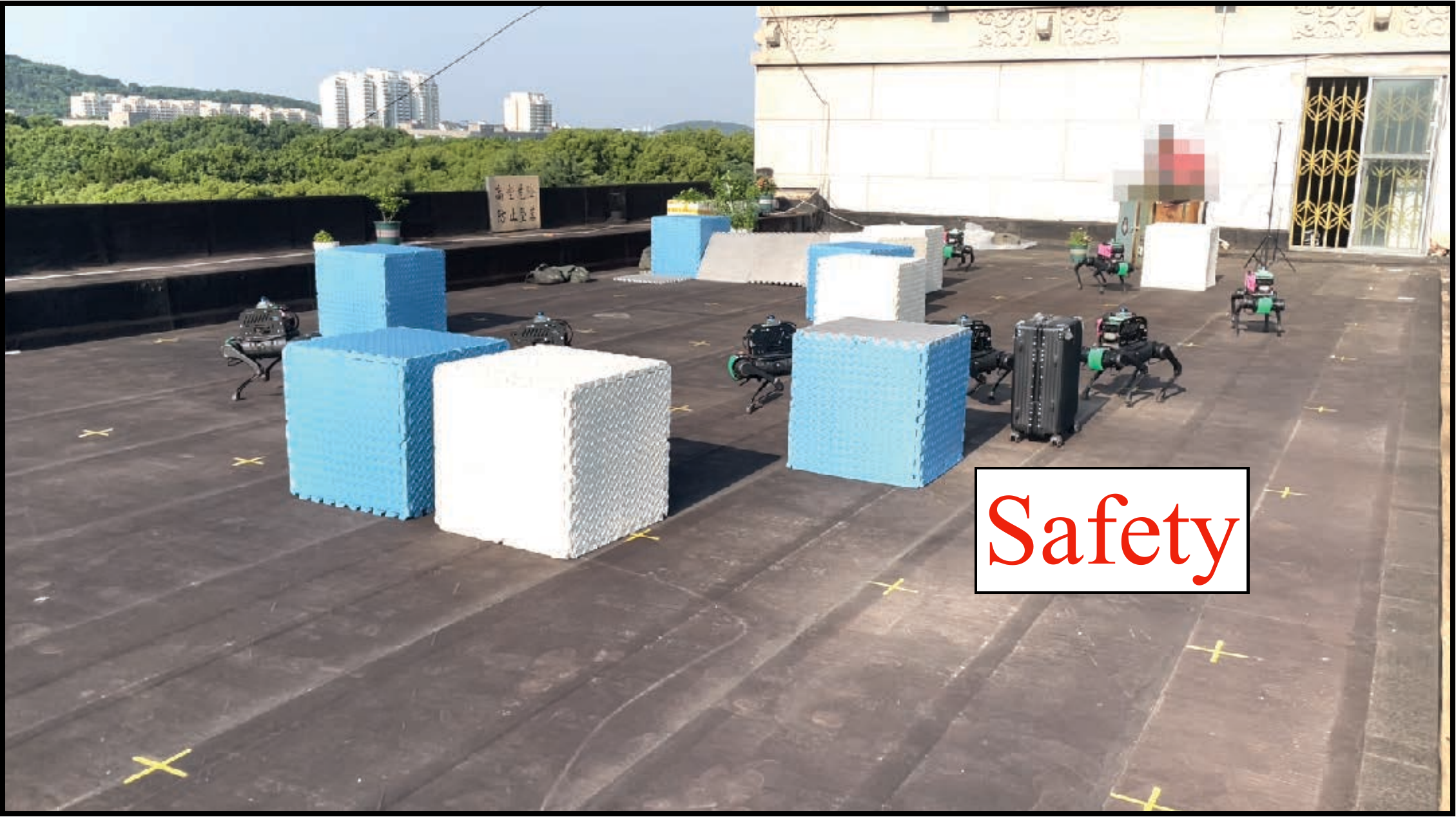}
\label{fig:robust_disturb}
}
\caption{Experimental Results on Unitree A1. 
(a) The safe corridor is disturbed by moving the black obstacle.
(b) Nominal TO with the disturbed safe corridor. (c) DRSCC-TO with the disturbed safe corridor. }
\label{fig:quad}
\end{figure}
We give another application example of quadruped navigation, in order to show how the nominal TO will lead to cataphatic failures when the safe corridor is inaccurate but our method will still succeed. 

The quadruped robot, Unitree A1, is shown in Fig. \ref{fig:robot}. We leverage a LiDAR and the Fast-LIO2 algorithm \cite{xu2022fast} to locate the robot in 2D space. We utilize a nonlinear model predictive controller based on the OCS2 toolbox \cite{OCS2} for quadruped locomotion control. All the algorithms are implemented on an onboard computer with the Intel i7 CPU. All the hyper-parameters of our DRSCC-TO are the same as those of the UAV test. First, we compare the nominal TO with DRSCC-TO similar to the UAV test: we assume that there are no perception errors and that the perfect safe corridor is used for both methods. The results are similar to the UAV test:  both methods are collision-free while the robot with the nominal TO is very close to some obstacles. It is not surprising since the obstacle map is perfectly known.

Then, we disturb the safe corridor by moving one obstacle slightly, as shown in Fig. \ref{fig:moving_box}. 
We aim to show how both methods perform when the exact safe corridor is disturbed. The results of the second test are shown in Fig. \ref{fig:nominal_disturb} and \ref{fig:robust_disturb}. With DRSCC-TO, the robot could also navigate safely without any collision. However, the robot with the nominal method collides with the black suitcase severely. 
Therefore, we can conclude from this test that our method is practically useful for safety-critical robot navigation to reduce the risk of collisions induced by perception errors.
\section{Conclusion}
In this paper, we propose a robust trajectory optimization framework, DRSCC-TO, for mobile robots navigating in uncertain environments. In particular, we propose distributionally robust chance constraints to deal with the uncertainty of the safe corridor and then incorporate the constraints into the trajectory optimization problem which is proved to be a convex quadratic program. The results of numerical simulation and robot experiments show that our method significantly reduces infeasible motions and improves the safety of robot autonomy. 

\section*{Appendix}
\subsection*{Proof of Theorem 1:}
Notice that \eqref{drcc_original_l_low} is equivalent to 
\begin{equation*}
\sup_{\mathbb{P}_i^l \in \mathcal{F}(\hat{\mathbb{P}}^l_i,\theta^l_i)} \mathbb{P}_i^l(\mathbf{e}_\mu^\top \tilde{\mathbf{s}}^l_i > c^j_{i,\mu}) \leq \epsilon,
\end{equation*}
where the strict inequality can be replaced by a weak one and we have 
\begin{equation}\label{prob:drcc}
\sup_{\mathbb{P}_i^l \in \mathcal{F}(\hat{\mathbb{P}}^l_i,\theta^l_i)} \mathbb{P}_i^l(\mathbf{e}_\mu^\top \tilde{\mathbf{s}}^l_i \geq c^j_{i,\mu}) \leq \epsilon,
\end{equation}
see, e.g., proposition 3 in \cite{gao2022distributionally}. To proceed, we kindly remind the definition of the value-at-risk:
$$
\mathbb{P}\text{-VaR}_{1-\epsilon}(\tilde{s})=\inf_{x\in\mathbb{R}}\{x\;\vert\;\mathbb{P}(\tilde{s}>x)\leq\epsilon\}
$$
with risk threshold $\epsilon\in(0,0.5)$. Then, we can rewrite~\eqref{prob:drcc} as:
\begin{equation}\label{prob:drcc lhs}
\sup_{\mathbb{P}_i^l \in \mathcal{F}(\hat{\mathbb{P}}^l_i,\theta^l_i)} \mathbb{P}_i^l(\mathbf{e}_\mu^\top \tilde{\mathbf{s}}^l_i - c^j_{i,\mu} \geq 0) \leq \epsilon,
\end{equation}
which is equivalent to
\begin{equation}\label{prob:worst case var}
\sup_{\mathbb{P}_i^l \in \mathcal{F}(\hat{\mathbb{P}}^l_i,\theta^l_i)} \mathbb{P}^l_i \text{-VaR}_{1-\epsilon}(\mathbf{e}_\mu^\top\tilde{\mathbf{s}}^l_i-c^j_{i,\mu})\leq 0.
\end{equation}
Note that the equivalence follows immediately from the definition of the value-at-risk.
Due to the choice of the Mahalanobis norm, we can establish the equivalence between \eqref{prob:worst case var} and 
\begin{equation}
\label{prob:cc nominal}
\mathbb{P}_{\pmb{\mu}^l_i,\pmb{\Sigma}^l_i,g^l_i}(\mathbf{e}_\mu^\top\tilde{\mathbf{s}}^l_i\leq c^j_{i,\mu})\geq 1-\underline{\epsilon}^l_i.
\end{equation}
Please see corollary 4.9 in \cite{chen2021sharing} and also \cite{ruan2023risk} for the details of the proof of the equivalence.  Remember in Assumption~\ref{distributional_ambiguity} that the reference distribution is assumed to be an elliptical distribution $\hat{\mathbb{P}}^l_i=\mathbb{P}_{\pmb{\mu}^l_i,\pmb{\Sigma}^l_i,g^l_i}$. Also, from \eqref{underline_epsilon} and \eqref{eta}, we have:
\begin{equation}
\label{underline_epsilon_less_0.5}
	\begin{array}{rl}
		\underline{\epsilon}^l_i &= 1-\Phi^h_i(\eta^*) \\
		&\leq 1-\Phi^h_i((\Phi^h_i)^{-1}(1-\epsilon^l_i)) \\
	&=\epsilon^l_i<0.5.
	\end{array}
\end{equation}
Therefore, we have the equivalences as follows:
\begin{equation*}
\begin{array}{rl}
&\displaystyle \hat{\mathbb{P}}^l_i(\mathbf{e}_\mu^\top \tilde{\mathbf{s}}^l_i \leq c^j_{i,\mu}) \geq 1-\underline{\epsilon}^l_i  \vspace{1mm}\\
\Longleftrightarrow &\mathbb{P}_{\pmb{\mu}^l_i,\pmb{\Sigma}^l_i,g^l_i}(\mathbf{e}_\mu^\top\tilde{\mathbf{s}}^l_i\leq c^j_{i,\mu})\geq 1-\underline{\epsilon}^l_i \vspace{1mm} \\
\Longleftrightarrow& \Phi\left(\frac{c^j_{i,\mu}-\mathbf{e}_\mu^\top\bm{\mu}^l_i}{\sqrt{\mathbf{e}_\mu^\top\mathbf{\Sigma}^l_i\mathbf{e}_\mu}}\right) \geq 1-\underline{\epsilon}^l_i \vspace{1mm}\\
\Longleftrightarrow& c^j_{i,\mu} \geq \mathbf{e}_\mu^\top\pmb{\mu}^l_i + \sqrt{\mathbf{e}^\top_\mu \mathbf{\Sigma}_i^l \mathbf{e}_\mu} (\Phi^l_i)^{-1}(1-\underline{\epsilon}^l_i), 
\end{array}
\end{equation*}
where the second equivalence holds for the linearity of elliptical distributions, and the third one is due to \eqref{underline_epsilon_less_0.5}.
Similarly, we can prove that \eqref{drcc_original_l_up} is equivalent to:
\begin{equation*}
c^j_{i,\mu} \leq \mathbf{e}_\mu^\top\pmb{{\mu}}^u_i - \sqrt{\mathbf{e}^\top_\mu \mathbf{\Sigma}_i^u \mathbf{e}_\mu} (\Phi^u_i)^{-1}(1-\underline{\epsilon}^u_i), 
\end{equation*}
Therefore, we conclude that the problem \eqref{robust_QP_original} is equivalent to problem \eqref{robust_QP_final}.


\bibliographystyle{IEEEtran}
\bibliography{ref}
\end{document}